\newtheorem{theorem}{Theorem}[section]
\newtheorem{lemma}[theorem]{Lemma}
\newtheorem{problem}[theorem]{Problem}
\newtheorem{example}[theorem]{Example}
\numberwithin{equation}{section}
\DeclareMathOperator*{\argmin}{argmin}
\title{\LARGE \bf
T* : A Heuristic Search Based Algorithm for Motion Planning with Temporal Goals
}
\author{Danish Khalidi$^{1}$, Dhaval Gujarathi$^{2}$ and Indranil Saha$^{3}$
\thanks{$^{1}$ Danish Khalidi is with NetApp India.
This work was carried out when Danish was an M.Tech student in the Department of Computer Science and Engineering,
Indian Institute of Technology Kanpur.
{\tt\small danish.khalidi08@gmail.com}}%
\thanks{$^{2}$ Dhaval Gujarathi is with SAP India.
This work was carried out when Dhaval was an M.Tech student in the Department of Computer Science and Engineering,
Indian Institute of Technology Kanpur.
{\tt\small dhavalsgujarathi@gmail.com}}%
\thanks{$^{2}$ Indranil Saha is with Department of Computer Science and Engineering,
Indian Institute of Technology Kanpur.
{\tt\small isaha@cse.iitk.ac.in}}%
}
\begin{document}

\maketitle
\thispagestyle{empty}
\pagestyle{empty}

\begin{abstract}
Motion planning is one of the core problems to solve for developing any application involving an autonomous mobile robot. The fundamental motion planning problem involves generating a trajectory for a robot for point-to-point navigation while avoiding obstacles. Heuristic-based search algorithms like A* have been shown to be  efficient in solving such planning problems. Recently, there has been an increased interest in specifying complex motion plans using temporal logic. 
In the state-of-the-art algorithm, the temporal logic motion planning problem is reduced to a graph search problem and Dijkstra's shortest path algorithm is used to compute the optimal trajectory satisfying the specification.

The A* algorithm when used with an appropriate heuristic for the distance from the destination can generate an optimal path in a graph more efficiently than Dijkstra's shortest path algorithm. The primary challenge for using A* algorithm in temporal logic path planning is that there is no notion of a single destination state for the robot. We present a novel motion planning algorithm T* that uses the A* search procedure  \emph{opportunistically} to generate an optimal trajectory satisfying a temporal logic query. Our experimental results demonstrate that T* achieves an order of magnitude improvement over the state-of-the-art algorithm to solve many temporal logic motion planning problems in 2-D as well as 3-D workspaces.
\end{abstract}

\section{Introduction} 

\label{sec-introduction} 

Motion planning is one of the core problems in robotics where we design algorithms to enable an autonomous robot to carry out a real-world complex task successfully~\cite{LaValle:2006:PA:1213331}. A basic motion planning task consists of point-to-point navigation while avoiding obstacles and satisfying some user-given constraints. To solve this problem, there exist many methods among which
graph search algorithms like A*~\cite{Astar4082128} and sampling based motion planning techniques such as rapidly exploring random trees~\cite{doi:10.1177/02783640122067453} are two prominent ones.


Recently, there has been an increased interest in specifying complex motion plans using temporal logic (e.g.~\cite{Kress-GazitFP07,KaramanF09,BhatiaKV10,WongpiromsarnTM12,ChenTB12,UlusoySDBR13, SahaRKPS14}).
Using temporal logic~\cite{Baier:2008:PMC:1373322}, one can specify requirements that involve temporal relationship between different operations performed by robots.
Such requirements arise in many robotic applications, including persistent surveillance~\cite{UlusoySDBR13}, assembly planning~\cite{Halperin1998}, 
evacuation~\cite{RodriguezA10}, search and rescue~\cite{Jennings97}, localization~\cite{Fox00}, object transportation~\cite{Rus95}, and formation control~\cite{Balch98}.


A number of algorithms exist for solving Linear Temporal logic (LTL) motion planning problems in different settings. For an exhaustive review on this topics, the readers are directed to the survey by Plaku and Karaman~\cite{PlakuK15}. In case of robots with continuous state space,
we resort to algorithms that do not focus on optimality of the path, rather they focus on reducing the computation time to find a path. For example, sampling based LTL motion planning algorithms compute a trajectory in continuous state space efficiently, but without any guarantee on optimality (e.g~\cite{sampling5509503},~\cite{Kantaros2017},~\cite{VasileB13}). 
On the other hand, the LTL motion planning problem where the dynamics of the robot is given in the form of a discrete transition system can be solved to find an optimal trajectory for the robot by employing graph search algorithms  (e.g.~\cite{belta5650896},~\cite{UlusoySDBR13}). 
In this paper, we focus on the class of LTL motion planning problems where a robot has discrete dynamics and seek to design a computationally efficient algorithm to generate an optimal trajectory for the robot.  

Traditionally, the LTL motion planning problem for the robots with discrete dynamics is reduced to the problem of finding the shortest path in a weighted graph and Dijkstra's shortest path algorithm is employed to generate an optimal trajectory satisfying an LTL query~\cite{belta5650896}. However, for a large workspaces and a complex LTL specification, this approach is merely scalable. Heuristics based search algorithms such as A*~\cite{Russell:2009:AIM:1671238} have been successfully used in solving point to point motion planning problems  and is proven to be significantly faster than Dijkstra's shortest path algorithm. 
The A* algorithm when used with an appropriate heuristic for distance from the destination node can generate an optimal path in a graph efficiently. A* algorithm have not been utilized in temporal logic path planning as there is no notion of a single destination state in LTL motion planning. We, for the first time, attempt to incorporate the A* search algorithm in LTL path planning to generate an optimal trajectory satisfying an LTL query efficiently. 

We have applied our algorithm to solving various LTL motion planning problems in 2-D and 3-D workspaces and compared the results with that of the algorithm presented in~\cite{belta5650896}.
Our experimental results demonstrate that T* in many cases achieves an order of magnitude better computation time than that of the traditional approach to solve LTL motion planning problems.


\section{Preliminaries} 
\label{sec-preliminaries}

 The inputs to our algorithm consists of the robot workspace $\mathcal{W}$, the set of robot actions $Act$, and a Linear Temporal Logic Query $\Phi$.

\subsection{Workspace, Robot Actions and Trajectory}
In this work, we assume that the robot operates in a two-dimensional (2-D) or a three dimensional (3-D) workspace $W$ which we represent as a grid map. The grid divides the workspace into square shaped cells. Each of these cells denote a state in the workspace $W$ which is referenced by its coordinates. Some cells in the grid could be marked as obstacles and in any case the robot is not allowed to visit such states. 

We capture the motion of a robot using  a set of Actions $Act$. The robot changes its state in the workspace by performing the actions in $Act$. An action $act\in Act$ is associated with a $cost$ which captures the energy consumption or time delay to execute it. A robot can move to satisfy a given specification by executing a sequence of actions in $Act$. The sequence of states followed by the robot is called the \emph{trajectory} of the robot. The \emph{cost of a trajectory} is defined as the sum of costs of the actions that are utilized to realize the trajectory. 

\subsection{Transition System}
We can model the motion of the robot in the workspace $\mathcal{W}$ as a weighted transition system. A weighted transition system for a robot  with the set of actions $Act$ is defined as $T := (S_T, s_0, E_T,\Pi_T, L_T, w_T, \mathcal{O}_T )$  where, (i) $S_T$ is the set of states/vertices, (ii) $s_0 \in S_T$ is the initial state of the robot, (iii) $E_T \subseteq S_T \times S_T$ is the set of transitions/edges, $(s_1 , s_2) \in E_T$  iff  $s_1 , s_2 \in S_T$ and $s_1 \xrightarrow[]{act} s_2$, where $act \in Act$, (iv) $\Pi_T$ is the set of atomic propositions, (v) $L_T : S_T \rightarrow 2^{\Pi_T}$ is a map which provides the set of atomic propositions satisfied at a state in $S_T$, and (vi) $w_T: E_T \rightarrow \mathbb{N}_{>0}$ is a weight function.
(vii) $\mathcal{O}_T$: set of obstacle cells in $\mathcal{W}$



We can think of a weighted transition system $T$ as a weighted directed graph $G_T$ with $S_T$ vertices,$E_T$ edges and $w_T$ weight function. Whenever we use some graph algorithm on a transition system T, we mean to apply it over $G_T$.  
\begin{example}
Throughout this paper, we will use the workspace $\mathcal{W}$ shown in Figure \ref{exampleGrid} for the illustrative example. We build transition system $T$ over $\mathcal{W}$ where $\Pi_T = \{P_1, P_2, P_3\}$. The proposition $P_i$ is $\mathtt{true}$ if the robot is in one of the locations denoted by $P_i$. From any cell in $\mathcal{W}$, robot can move to one of it's neighbouring four cells with cost 1. Cells with black colour represent obstacles($\mathcal{O}_T$) and in any scenario, robot cannot occupy them. 
\end{example}


\begin{figure} 
\centering
\subfigure[Transition system $T$ with propositions $P_{1},P_{2}\text{ and }  P_{3}$]{\centering \label{exampleGrid}\includegraphics[width=46mm]{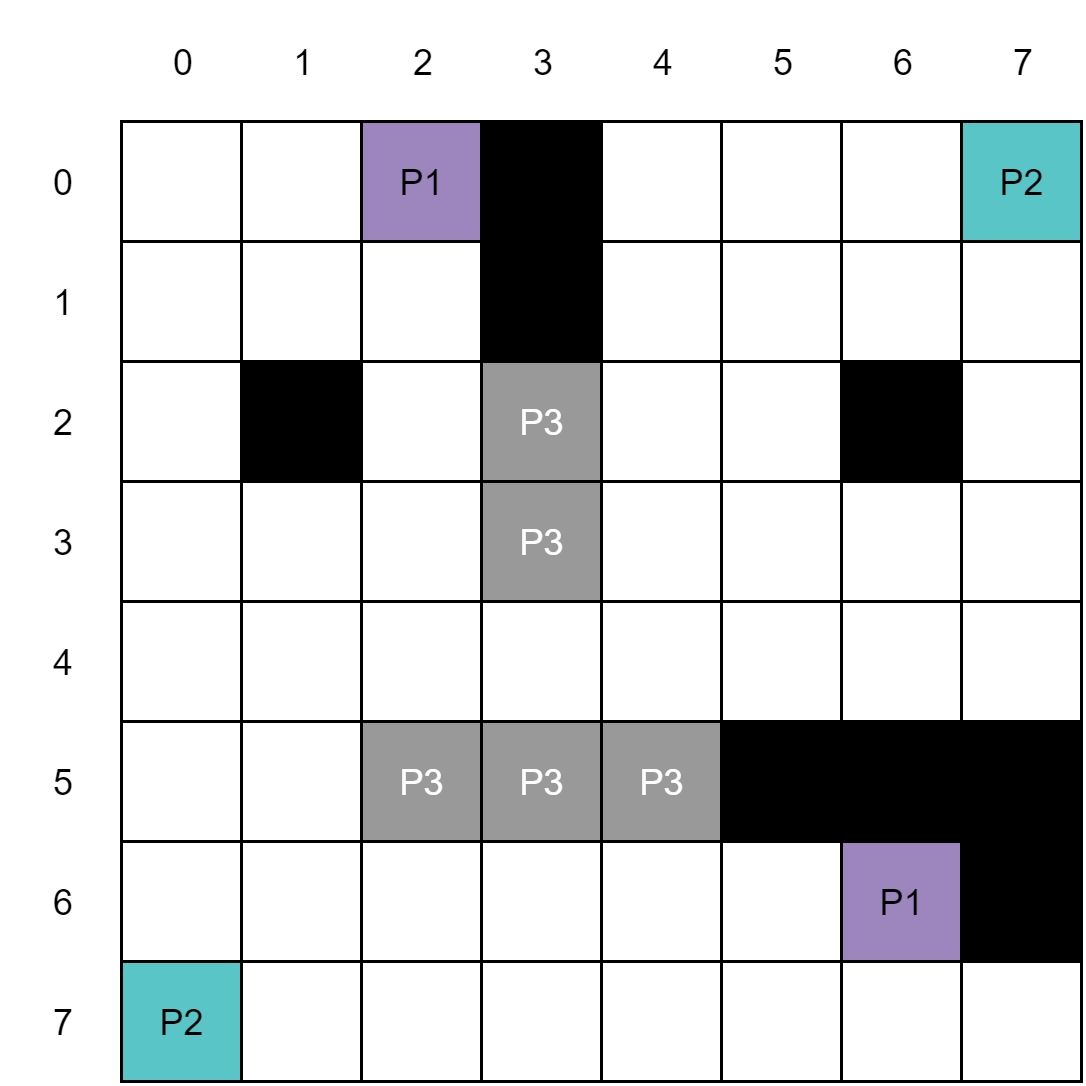}}
\hspace{1.2mm}
\subfigure[B\"{u}chi automaton B for query: $\square(\Diamond p_{1} \land \Diamond p_{2} \land \neg p_{3})$]{\centering \label{exampleAutomata}\includegraphics[width=37mm]{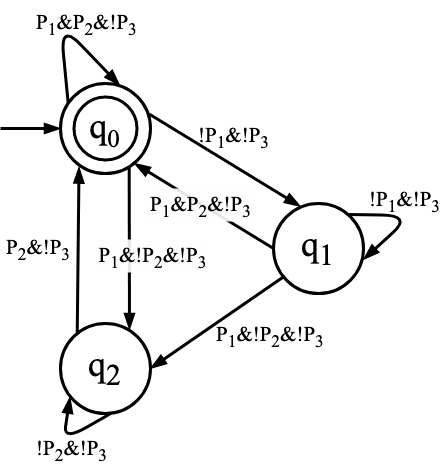}}
\caption{Transition System and B\"uchi Automaton}
\end{figure}

\subsection{Linear Temporal Logic}
The motion planning query/task in our work is given in terms of formulas written using \emph{Linear Temporal Logic} (LTL).
LTL formulae over the set of atomic propositions $\Pi_T$ are formed according to the following grammar~\cite{Baier:2008:PMC:1373322}:
$$\Phi:: = \mathtt{true} \| a \| \phi_{1} \land \phi_{2} \| \neg\phi \| X \phi \| \phi_{1} U \phi_{2}. $$ 

The basic ingredients of an LTL formulae are the atomic propositions $a \in \Pi_T$, the Boolean connectors like conjunction $\land$, and negation $\neg$, and two temporal operators $X$ (next) and $U$ (until). 
The semantics of an LTL formula is defined over an infinite trajectory $\sigma$. 
The trajectory $\sigma$ satisfies a formula $\xi$, if the first state of $\sigma$ satisfies $\xi$.
The logical operators conjunction $\land$ and negation $\neg$ have their usual meaning.
For an LTL formula $\phi$,
$X\phi$ is $\textsf{true}$ in a state if $\phi$ is satisfied at the next step. The formula $\phi_1 \ U \ \phi_2$ denotes that $\phi_1$ must remain $\textsf{true}$ until $\phi_2$ becomes $\textsf{true}$ at some point in future.
The other LTL operators that can be derived are  $\square$ (\emph{Always}) and $\Diamond$ (\emph{Eventually}). The formula $\square \phi$ denotes that the formula $\phi$ must be satisfied all the time in the future. The formula $\Diamond \phi$ denotes that the formula $\phi$ has to hold sometime in the future. We have denoted negation $\lnot P$ as $!P$ and conjunction as $\&$ in Figures. 




\subsection{B{\"u}chi Automaton}
For any LTL formulae $\Phi$ over a set of propositions $\Pi_T$, we can construct a B{\"u}chi automaton with input alphabet $\Pi_B = 2^{\Pi_T}$. We can define a B{\"u}chi automaton as ${B:= (Q_B, q_0, \Pi_B, \delta_B, Q_f)}$, where, (i) $Q_B$ is a finite set of states, (ii) $q_0 \in Q_B$ is the initial state, (iii) $\Pi_B = 2^{\Pi_T}$ is the set of input symbols accepted by the automaton, (iv) $\delta_B \subseteq Q_B \times \Pi_B \times Q_B$ is a transition relation, and (v) $Q_f \subseteq Q_B$ is a set of final states.
An accepting state in the B\"uchi automaton is the one that needs to occur infinitely often on an infinite length string consisting of symbols from $\Pi_B$ to get accepted by the automaton.
\begin{example}
Figure \ref{exampleAutomata} shows the B\"uchi automaton for an LTL task $\Phi = \square(\Diamond p_{1} \land \Diamond p_{2} \land \neg p_{3})$.  The state $q_0$ here denotes the start state as well as the final state. It informally depicts the steps to be followed in order to compete the task $\Phi$. The transitions $q_1 \rightarrow q_2 \rightarrow q_0$ lead to visit of state 
where $P_1 \land \lnot P_2 \land \lnot P_3$ is satisfied using only states which satisfy $\lnot P_1 \land \lnot P_3$ and then go to state where $P_2 \land \lnot P_3 $ using states which  satisfy $\lnot P_2 \land \lnot P_3$. 
We can understand the meaning of the other transitions from the context.  

\subsection{Product Automaton}\label{productAutomaton}
\sloppy{ The product automaton $P$ between the transition system $T = (S_T, s_0, E_T,\Pi_T, L_T, w_T, \mathcal{O}_T )$ and the B{\"u}chi automaton $B = (Q_B, q_0, \Pi_B, \delta_B, Q_f)$ is defined as $$P := (S_P, S_{P,0}, E_P, F_P, w_p)$$ 
where, (i) $S_P = S_T \times Q_B$, (ii) $S_{P,0}:= (s_0, q_0)$ is an initial state, (iii) $E_P \subseteq S_P \times S_P$, where $((s_i,q_k) ,(s_j,q_l)) \in E_P$ if and only if $(s_i, s_j)\in E_T$ and $\left(q_k, L_T\left(s_j\right),q_l\right) \in \delta_B$, (iv) the set of final states $F_P := S_T \times Q_f $, and $f(s_i, q_k) \in F_P $ if and only if $f \in S_P$ and $q_k \in Q_f$, and
(v) $w_P : E_P \rightarrow \mathbb{N}_{>0}$ such that $w_P((s_i,q_k) ,(s_j,q_l)) := w_T(s_i, s_j)$ for all $((s_i,q_k) ,(s_j,q_l)) \in E_P$.}
By it's definition, all the states and transitions in the product automaton follows LTL queries. You can refer \cite{belta5650896} for product graph examples.

\end{example}

\section{Problem Definition} 
\label{sec-problem}

\sloppy Consider a robot moving in a static workspace $(\mathcal{W})$ and its motion is modeled as a transition system $T=(S_T, s_0, E_T,\Pi_T, L_T, w_T, \mathcal{O}_T )$.
A run over the transition system $T$ starting at initial state $s_0$ and time instant $t_0$ defines the trajectory of the robot in $\mathcal{W}$.

Suppose, the robot has been given a task in the form of an LTL query $\phi$ over $\Pi_T$ which involves carrying out some tasks repetitively. 
As mentioned above, we can construct a B{\"u}chi automaton $B= (Q_B, q_0, \Pi_B, \delta_B, Q_f)$ from $\phi$. Let $\Pi_c = \{ c \:|\: c \in \Pi_B$ and $ \exists \delta_B(q_i , c) =  q_j$ where, $q_i \in Q_B$ and $q_j \in Q_f  \}$. 
The set $\Pi_c$ represents a set of formulae using which we can reach a final state in $B$ from some other state. Let $F_{\pi} = \{ s_i \: | \: s_i \in S_T$ and $ s_i \vDash \pi_j$ where $\pi_j \in \Pi_c \}$. $F_{\pi}$ represents the subset of the states of $T$ which satisfy the incoming transition to a final state in $B$.

Let us assume that there exists at least one run over $T$ which satisfies $\phi$. 
Let $\mathcal{R} = s_0,s_1,s_2,...$ be an infinite length run/path 
over $T$ which satisfies $\phi$ and the corresponding infinite length trace of sets of atomic propositions 
is $\mathcal{R}_\pi = \pi_0,\pi_1,\pi_2,...$ where $\pi_i = L_T(s_i)$.
Let $t_\mathcal{R} = t_0, t_1, t_2,...$ be the time sequence associated with $\mathcal{R}$, where $t_0 = 0$.
As $\mathcal{R}$ satisfies $\phi$, then there has to exist $f \in F_\pi$ occurring on $\mathcal{R}_\pi$ infinitely many times. From $\mathcal{R}_\pi$, we can extract all the time instances at which $f$ occurs. Let $t_\mathcal{R}^f(i)$ denotes the time instance of $i^{th}$ occurrence of state $f$ on $\mathcal{R}_\pi$.
Our goal is to synthesize an infinite run $\mathcal{R}$ which satisfies the LTL formulae $\phi$  and minimizes the cost function 
\begin{equation} \label{eq1}
\mathcal{C}(\mathcal{R})=\limsup_{i \rightarrow+\infty} \sum_{k=t_\mathcal{R}^f(i)}^{t_\mathcal{R}^f(i+1)-1} w_T(s_k, s_{k+1})
\end{equation}
The above cost function represents the maximum cost of a sub-run starting and ending at $f$.  With this cost function, we want to minimize the maximum time between the successive instances of $f$ which represents the completion of the task.
Now, we formally define the problem as follows:

\begin{problem}
Given a transition system $T$ capturing the motion of the robot in workspace $\mathcal{W}$ and an LTL formulae $\phi$ representing the task given to the robot, find an infinite length run $\mathcal{R}$ over $T$ which minimizes the cost function $\mathcal{C}(\mathcal{R})$ given by equation \ref{eq1}.
\end{problem}


\subsection{Prefix-Suffix Structure} The accepting run  $\mathcal{R}$ of infinite length can be divided into two parts namely  \emph{prefix} ($\mathcal{R}_{pre}$) and \emph{suffix} ($\mathcal{R}_{suf}$). A prefix is a finite run from initial state of the robot to an accepting state $f \in F_\pi$ and a suffix is a finite length run starting and ending at $f$ reached by the prefix, and containing no other occurrence of $f$. This suffix will be repeated periodically and infinitely many times to generate an infinite length run $\mathcal{R}$. So, we can represent run $\mathcal{R}$ as $\mathcal{R}_{pre}. \mathcal{R}_{suf}^\omega$, where $\omega$ denotes the suffix being repeated infinitely many times. 
\begin{lemma}\label{lemma1}
At least one of the accepting runs $\mathcal{R}$ that minimizes cost function $\mathcal{C}(\mathcal{R})$ is in prefix-suffix structure.


\begin{proof}
As we discussed earlier, there exists $f \in F_\pi$ occurring on $\mathcal{R}$ infinitely many times. Using this argument, we can prove this lemma as mentioned in \cite{belta5650896}
\end{proof}
\end{lemma}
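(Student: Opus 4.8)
The natural approach is to carry the whole argument out inside the product automaton $P = (S_P, S_{P,0}, E_P, F_P, w_P)$, which is a \emph{finite} weighted directed graph. I would first recall the standard correspondence: an infinite run of $T$ satisfies $\phi$ if and only if it is the projection of an infinite path in $P$ that starts in $S_{P,0}$ and visits $F_P$ infinitely often, and along such a path the weights $w_P$ agree with $w_T$, so the cost $\mathcal{C}(\mathcal{R})$ of (\ref{eq1}) can be evaluated directly on the $P$-path (each occurrence of the designated recurring state $f \in F_\pi$ on $\mathcal{R}_\pi$ being witnessed by a visit of the path to a state of $F_P$). Hence it suffices to show: given an arbitrary accepting path in $P$, one can build an accepting path in prefix-suffix form whose cost is no larger.

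So I would fix an arbitrary accepting path $\mathcal{R}$ in $P$. Since $F_P$ is finite and is visited infinitely often, some state $p^\star \in F_P$ recurs infinitely often, say at positions $\tau_1 < \tau_2 < \cdots$; cutting $\mathcal{R}$ at these positions writes it as $\mathcal{R} = \rho_0 \cdot \ell_1 \cdot \ell_2 \cdots$, where $\rho_0$ is the finite segment from $S_{P,0}$ to the first visit of $p^\star$ and each $\ell_j$ is a cycle at $p^\star$ with no interior visit of $p^\star$. Taking $f$ to be the transition-system component of $p^\star$ (so that the $\tau_j$ are, up to a tail, the occurrences of $f$ on $\mathcal{R}_\pi$), the definition (\ref{eq1}) reads $\mathcal{C}(\mathcal{R}) = \limsup_j w_P(\ell_j) \ge \inf_j w_P(\ell_j)$, where $w_P(\ell_j)$ denotes the total weight of the cycle $\ell_j$. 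The key point is that every $w_P(\ell_j)$ is a finite sum of values of $w_T : E_T \to \mathbb{N}_{>0}$, hence a positive integer, so $\{ w_P(\ell_j) : j \ge 1 \}$ is a nonempty set of positive integers and its infimum is attained --- say by $\ell^\star$, with weight $c_0 := w_P(\ell^\star)$. Letting $\rho_0'$ be the prefix of $\mathcal{R}$ from $S_{P,0}$ to the start of $\ell^\star$, set $\mathcal{R}' := \rho_0' \cdot (\ell^\star)^\omega$. Concatenating valid $P$-segments and iterating a $P$-cycle gives a well-defined infinite $P$-path that revisits $p^\star \in F_P$ infinitely often, so its projection is a run of $T$ satisfying $\phi$; and $\mathcal{C}(\mathcal{R}')$ is the $\limsup$ of the constant sequence $c_0$, i.e.\ $\mathcal{C}(\mathcal{R}') = c_0 \le \mathcal{C}(\mathcal{R})$. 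Applying this construction to an optimal run --- equivalently, observing that the set of achievable costs is thereby seen to be a nonempty subset of $\mathbb{N}_{>0}$ and so has a minimum, attained by a prefix-suffix run --- yields the lemma, with $\mathcal{R}_{pre} = \rho_0'$ and $\mathcal{R}_{suf} = \ell^\star$.

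The step I expect to be the main obstacle is the bookkeeping that reconciles the cost function (\ref{eq1}), which is stated in terms of the times at which a \emph{transition-system} state $f \in F_\pi$ recurs on $\mathcal{R}_\pi$, with a loop decomposition taken at a \emph{product} state $p^\star \in F_P$: one must choose $p^\star$ so that its visits align with the occurrences of a fixed recurring $f$ that governs the cost (using the fact that an edge of $P$ into an accepting state forces its target's label into $\Pi_c$, hence its transition-system component into $F_\pi$), and one must check that $\ell^\star$ contains no interior occurrence of that $f$, so that it is a legitimate suffix in the sense of the definition above. Everything else --- validity of the spliced run, its consistency with the B\"uchi acceptance condition, and finiteness of the optimum --- is immediate once the problem has been pushed onto the finite graph $P$; this is in essence the argument of \cite{belta5650896}.
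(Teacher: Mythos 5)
Your proposal is correct and follows essentially the same route as the paper, which itself only sketches the argument by noting the existence of an infinitely recurring accepting state and deferring the rest to \cite{belta5650896}; your loop decomposition at a recurring state of $F_P$, with the infimum of the positive-integer loop weights attained and the minimal loop repeated forever, is precisely the argument of that reference. You in fact supply more detail than the paper does, including the existence of a minimizer and the alignment between visits to $f \in F_\pi$ and visits to the product accepting state.
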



Using Lemma \ref{lemma1}, we can say that if there exists a run $\mathcal{R}$ which  satisfies LTL formulae and minimizes \ref{eq1}, then there also exists a run $\mathcal{R}_\mathcal{C}$ which is in prefix-suffix structure, satisfies the LTL formulae and minimizes \ref{eq1}. The cost of such run is the cost of it's suffix. So, now our goal translates to determining an algorithm which finds minimum cost suffix run starting and ending at a state $f \in F_\pi$ and a finite length prefix run starting at initial state $s_0 \in S_T$ and ending at $f$. 
So, let $\mathcal{R} = \mathcal{R}_{pre}.\mathcal{R}_{suf}^\omega$, where $\mathcal{R}_{pre} = s_0,s_1,s_2,...,s_p$ be a prefix and $\mathcal{R}_{suf} = s_{p+1},s_{p+2},...,s_{p+r}$ be a suffix. We can redefine the cost functuion given in \ref{eq1} as
\begin{equation} \label{eq3}
 \mathcal{C}(\mathcal{R})= \mathcal{C}(\mathcal{R}_{suf}) = \sum_{i=p+1}^{p+r-1} w_T(s_i, s_{i+1})
\end{equation}
\begin{problem}
Given a transition system $T$ capturing the motion of the robot in workspace $\mathcal{W}$ and an LTL formulae $\phi$ representing the task given to the robot, find an infinite length run $\mathcal{R}$ over $T$ which minimizes the cost function $\mathcal{C}(\mathcal{R})$ given by equation \ref{eq3}.
\end{problem}

\subsection{Basic Solution Approach}

The basic solution to above problem uses the automata-theoretic model checking approach \cite{belta5650896}. Steps to it are outlined in the algorithm \ref{basicSolution}.


\begin{algorithm2e}
\SetAlgoLined
\DontPrintSemicolon
\setstretch{1.0}
\textbf{Input:} $T$ : a transition system, $\phi$ :LTL formulae\;
\textbf{Output: }A run $\mathcal{R}_T$ over $T$ that satisfies $\phi$\;
\BlankLine
Convert $\phi$ to a B\"uchi automaton $B$.\;
Compute the product automaton $P = T \times B$.\;
\For{all $f \in F_P$ }{
$\mathcal{R}_f^{suf} \gets \mathtt{Dijkstra's\_Algorithm}(\: \emph{f} \:,\:  \emph{f}\:)$\;
$\mathcal{R}_f^{pre} \gets \mathtt{Dijkstra's\_Algorithm}(\: S_{P,0} \:,\:  \emph{f}\:)$
}
$\mathcal{R}_P^{suf} \gets$ minimum of all $\mathcal{R}_f^{suf}$ \;
$\mathcal{R}_P^{pre} \gets$ prefix of $\mathcal{R}_P^{suf}$ \;
$\mathcal{R}_P = \mathcal{R}_P^{suf} . \mathcal{R}_P^{pre} $\;
Project $\mathcal{R}_P$ over $T$ to compute $\mathcal{R}_T$
\caption{Basic\_Solution}
\label{basicSolution}
\end{algorithm2e} 

The first step in this algorithm is to compute B\"uchi automaton from the given LTL query $\phi$. Next we compute the product automaton of the transition system $T$ and B\"uchi automaton $B$. In this product automaton, for each final state $f \in F_P$, we find a prefix run starting from initial state $S_{P,0}$ to $f \in F_P$ and then find minimum cost cycle starting and ending at $f$ using Dijkstra's algorithm. We then choose prefix-suffix pair with smallest $\mathcal{C}(\mathcal{R}_P)$ cost i.e. pair with minimum suffix cost and project it on $T$ to obtain the final solution. 


Let the total number of sub formulae in LTL formula $\Phi$ be denoted as $|\Phi|$. Thus, the maximum number of states in the automaton is $2^{|\Phi|}$. In this way, the LTL to B\"{u}chi automaton conversion has the computational complexity $\mathcal{O}(2^{|\Phi|})$ as mentioned in \cite{DuretLutz2004SPOTAE}.
We compute the product graph using the BFS algorithm. Thus, the complexity to compute the reduced graph is given as $\mathcal{O} (|S_P| + |E_P|)$. Since we run Dijkstra's algorithm for finding shortest suffix cycle for each point in $f \in F_P$, the complexity of do it is $\|F_P\| \times (\|E_{P}\| \times log \|S_{P}\|)$.
So, the overall complexity of the basic solution is $\mathcal{O}( 2 ^ {|\Phi|} + ( |S_P| + |E_P| ) + |F_P| * |E_P|*log|V_P| )$ . 

In the following section, we present T$^*$ algorithm that provides a significantly improved running time for generating an optimal trajectory satisfying a given LTL query.

\section{T$^*$ Algorithm}

The environments for robotic applications are continuous in nature. To reduce the amount of computation required during the motion planning of the robots in such environments, they are partitioned 
into smaller cells or blocks based on the dynamics of the robot. We call it a \emph{discrete workspace} in this paper. Heuristic information can be used to speed up the planning process in such workspaces \cite{Astar4082128}. To use the heuristic information, we need a concrete destination/goal to direct the search towards it. 
However, in the motion planning problem for LTL specifications, the robot may need to visit multiple locations repetitively in some order.
In most cases, there are multiple choices available to it to complete a task. Suppose in a pickup-drop application, there are multiple pick up and upload stations, and the robot has been asked to repeat the process of visiting any pick up station and then an upload station. In such a scenario, we cannot clearly specify a destination and direct path search towards it. T$^*$ attempts to use the heuristic information available in the discrete workspaces incrementally and thus achieves substantial speed up in terms of computation time over the basic solution \cite{belta5650896}. Also, as the size of the workspace increases, the size of the product automaton/graph also increases substantially, thus the time to run the Dijkstra's algorithm and memory consumption. T$^*$ does not compute the complete product graph, instead it computes a reduced version of the product graph which we call the reduced graph $G_r$, and thus achieves faster run times and lower memory consumption. Now, we introduce reduced graph $G_r$ that will be used in the T$^*$ algorithm.

\subsection{Reduced Graph}
Consider a product automaton $P$ of the transition system $T$ corresponding to the workspace shown in Figure~\ref{exampleGrid} and the B\"uchi automaton $B$ shown in Figure \ref{exampleAutomata}.  Suppose the initial location of the robot is $(4,7)$ and therefore the  initial state in the product automaton is $S_{P,0} = ((4,7), q_0)$. We know that the B\"uchi automata represents a task to be completed repeatedly and infinitely many times. Thus, from the initial state/location, the robot must move to reach a cycle
using the constraints from B\"uchi automaton (prefix) and then it must follow the cycle(suffix) completing the task repeatedly. 

To find such a path, we start from $((4,7), q_0)$ and move to state $((4,6), q_1)$ as per the definition of the product graph/automata in \ref{productAutomaton}. We are at the automata state $q_1$. To complete the task, we must visit a location where $P_1 \land \lnot P_2 \land \lnot P_3$ is satisfied so that we can move to B\"uchi state $q_2$ from $q_1$, and all the intermediate states till we reach such a state must satisfy $\lnot P_1 \land \lnot P_3$ formulae. Suppose next we move from $((4,6),q_1)$ to $((0,2), q_2)$ which satisfies $P_1 \land \lnot P_2 \land \lnot P_3$. There are many possible ways to go from $((4,6),q_1)$ to $((0,2),q_2)$ such as $((4,6), q_1) \rightarrow  ((4,5), q_1) \rightarrow ((4,4), q_1) \rightarrow ... \rightarrow ((1,2), q_1) \rightarrow ((0,2), q_2) $. 
To complete the cycle, the robot will next move to the state $((4,7),q_0)$. In the path from $((4,6),q_1)$ to $((0,2),q_2)$, we can observe the transition from B\"uchi automata state $q_1$ to $q_2$, 
and to achieve this, we ensure that the condition on the self loop over $q_1$ is satisfied by all the intermediate states.
Thus, using an analogy, we can consider the self loop transition condition $\lnot P_1 \land \lnot P_3$ over $q_1$ as the constraint which must be followed by the intermediate states while completing a task of reaching to the next state $q_2$ in the B\"uchi automaton by moving to the location which satisfies the transition condition from $q_1$ to $q_2$. Note that the self loop is the only means to navigate to next state. Using this as an abstraction method for the product automaton, we directly add an edge from state $((4,6),q_1)$ to $((0,2),q_2)$ in the abstract version of the product graph assuming that there exists a path between these two states and we will explore this path opportunistically only when it is required. This is the main idea behind the T$^*$ algorithm. We call this abstraction a \emph{reduced graph} which is constructed by removing transitions caused due to some self loops (not all self loops) resulting in a much smaller graph to work on. 

In this paper, we call atomic proposition with negation as a \emph{negative proposition} and an atomic proposition without negation as a \emph{positive proposition}. For example, $\lnot P2$ is a negative proposition and $P_2$ is a positive proposition. We divide the transition conditions in $B$ into two types. A transition condition which is a conjunction of all negative propositions is called a \emph{negative transition condition} and is denoted by $c_{neg}$. The one which is not negative is called \emph{positive transition condition}, and is denoted by $c_{pos}$. For example, $\lnot P_1 \land \lnot P_3 $ is a negative transition condition whereas $P_1 \land \lnot P_2 \land \lnot P_3$ is a positive transition condition. 

            
    

While constructing the reduced graph, we add edge from node $v_i(s_i, q_i)$ to  $v_j(s_j, q_j)$ as per following condition

\noindent\textbf{Condition: $\exists \delta_B(q_i, c_{neg}) = q_i$ and $\nexists \delta_B(q_i, c_{neg}) = q_j$} i.e. if there exists a negative self loop on $q_i$ and there does not exist any other negative transition from $q_i$ to some state in the B\"uchi automaton

\begin{enumerate}
    \item \textbf{\emph{If condition is true then}} add edge from $v_i$ to all $v_j$ such that $\exists \delta_B(q_i, c_{pos}) = q_j$ and $s_j \models c_{pos}$. Here, $q_i$ and $q_j$ can be same.  In short, in this condition we add all the nodes as neighbours which satisfy an outgoing $c_{pos}$ transition from $q_i$ and skip nodes which satisfy  $c_{neg}$ self loop transition assuming that $c_{neg}$ self loop transition can be used to find the actual path from $v_i$ to $v_j$ later in the algorithm. We add \emph{heuristic} cost as edge weight between $v_i$ and $v_j$. In this case, we call $v_j$ a \emph{distant neighbour} of node $v_i$ and henceforth we refer this condition as the \emph{distant neighbour condition}. 
    \item \textbf{\emph{If condition is  false then}}
    add edge from $v_i$ to all $v_j$ such that $\exists \delta_B(q_i, c) = q_j$ , $(s_i, s_j) \in E_T$ and $ s_j \models c $. This condition is same as per definition of the product automaton \ref{productAutomaton}. Here, as $s_i$ and $s_j$ are actual neighbours in the transition system, we add actual cost as edge weight between $v_i$ and $v_j$. Here also, $q_i$ and $q_j$ can be same. Here we add all the neighbours as per \emph{product automaton condition} for all the outgoing transitions from $q_i$.
\end{enumerate}
\subsubsection{Reduced Graph}\label{reducedGraph}
\sloppy{ We formally define \emph{Reduced Graph} for the transition system $T = (S_T, s_0, E_T,\Pi_T, L_T, w_T, \mathcal{O}_T )$ and the B\"uchi automaton $B = (Q_B, q_0, \Pi_B, \delta_B, Q_f)$  as 
$$G_r := (V_r, v_{0}, E_r, \mathcal{U}_r, F_r, w_r )$$ 

where, (i) $V_r \subseteq S_T \times Q_B$, set of vertices, 
(ii) $v_{0}:= (s_0, q_0)$ is an initial state, 
(iii) $E_r \subseteq V_r \times V_r$, is a set of edges added as per above condition 
(iv) $\mathcal{U}_r: E_r \rightarrow \{true,false\}$ a map which stores if the edge weight is a heuristic value or actual value
(v)$F_r \subseteq V_r \text{ and } v_i(s_i,q_i) \in F_r \text{ iff } q_i \in Q_B$. It is the set of final states. 
(vi) $w_r : E_r \rightarrow \mathbb{N}_{>0}$, weight function
}

\begin{figure}[H]
      \centering
      \includegraphics[scale=0.23]{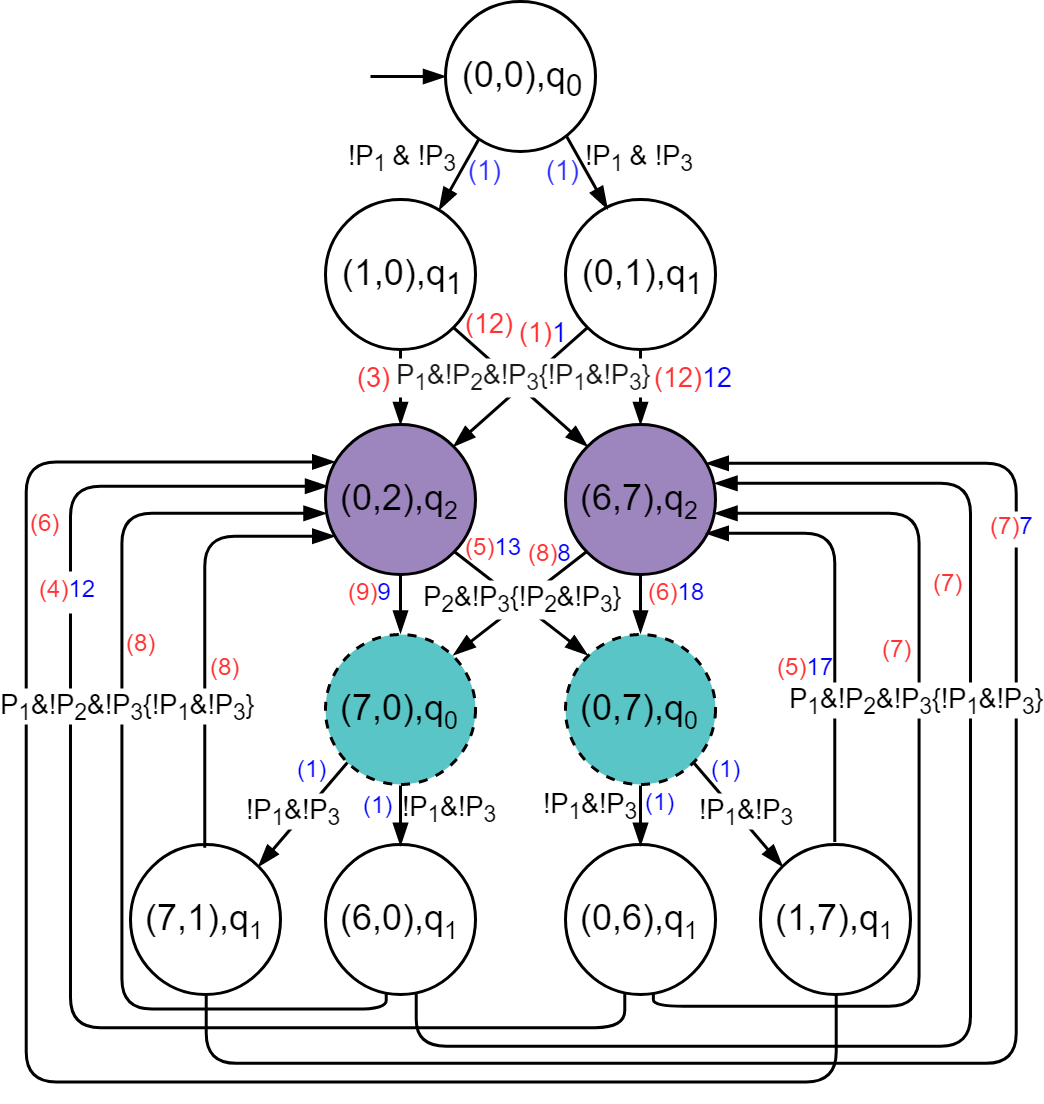}
      \caption{Reduced Graph for Transition System from Figure \ref{exampleGrid} and B\"uchi Automaton from Figure \ref{exampleAutomata}}
      \label{RedGraph}
\end{figure}

In procedure $\mathtt{Generate\_Redc\_Graph}$ of algorithm \ref{tStar} we run BFS algorithm starting from node $(s_0, q_0)$ and add the neighbours using the condition mentioned above. We use a map $\mathcal{U}_r$ to store the updated status of edges in $G_r$. $\mathcal{U}_r(v_i,v_j) = false$ says that weight of edge $(v_i, v_j)$ is heuristic cost between the two and we have not computed the actual cost between them.

The reduced graph obtained from the transition system $T$ in Figure~\ref{exampleGrid} and the B\"uchi automaton $B$ from Figure \ref{exampleAutomata} is shown in Figure \ref{RedGraph}. Edge weights in \emph{blue} color represents actual values whereas, in \emph{red} represents heuristic costs. Also, all the weights mentioned in round brackets '(--)' represents the state of reduced graph when it is constructed for the first time, whereas value beside the round brackets represents the actual value computed during procedure $\mathtt{Generate\_Redc\_Graph}$ later in the T$^*$ algorithm.

\begin{example}  
Suppose the robot is initially at location $(0,0)$ in the workspace $W$. We start with vertex $v_{0} = ((0,0),q_0)$. As $q_0$ does not have a self loop with negative transition condition, we add an edge from $S_{P,0}$ to $((1,0),q_1)$ as $((0,0), (1,0)) \in E_T$ and $(1,0) \models \lnot P_1 \land \lnot P_3$ as per product automaton condition. Similarly, we add an edge from $S_{P,0}$ to $((0,1),q_1)$. The cost of both of these transitions is 1 as these edges have been added with actual cost and not the heuristic cost.  
Next, we add neighbours of node $((1,0),q_1)$. As $q_1$ has a self loop with negative transition condition $\lnot P_1 \land \lnot P_3$, we add all the distant neighbours of $((1,0),q_1)$. We add an edge from $((1,0),q_1)$ to $((6,7),q_2)$ as $(6,7) \models (P_1 \land \lnot P_2 \land \lnot P_3)$. In Figure~$\ref{RedGraph}$, we represent this transition using notation $P_1 \land \lnot P_2 \land \lnot P_3\{ \lnot P_1 \land \lnot P_3 \}$ which says that $(6,7) \models (P_1 \land \lnot P_2 \land \lnot P_3)$ and all the intermediate nodes between the path from $(1,0)$ and $(6,7)$ will satisfy the condition $\lnot P_1 \land \lnot P_3$. As this node is added using distant neighbour condition, we update its edge weight with heuristic cost which is $11$ and shown using red color in the Figure \ref{RedGraph}. This way we keep on adding nodes to $G_r$. Figure~\ref{RedGraph} shows the complete reduced graph $G_r$.
\end{example}
  
\noindent Note: We have used Manhattan distance as heuristic cost. Always use the heuristic cost which is lower bound to  the actual cost.
 
\subsection{T$^*$ Procedure}
We have outlined the basic steps of T$^*$ in Algorithm~\ref{tStar}. The inputs to the algorithm ar the robot transition $T$ and robot task in the form of an LTL query $\phi$. The goal of the algorithm is to compute a minimum cost run $\mathcal{R}$ over $T$ which satisfies $\phi$. We first compute the B\"uchi automaton $B$ from the given LTL query $\phi$. Then we compute the reduced graph $G_r$ using $T$ and $B$.

\begin{algorithm2e}
\SetAlgoLined
\DontPrintSemicolon

\textbf{Input:} A transition system $T$, an LTL formulae $\phi$\;
\textbf{Output:} A minimum cost run $\mathcal{R}$ over $T$ that satisfies $\phi$\;

$B (Q_B, q_{0}, \Pi_B, \delta_B, Q_f) \gets \mathtt{ltl\_to\_Buchi}$
($\Phi$)\label{T* line 1}\;
$G_r(V_r, v_{0}, E_r, \mathcal{U}_r, F_r, w_r )\gets \mathtt{Generate\_Redc\_Graph(B,T)}$\label{T* line 2}\;
\
\For{all $f \in F_r$  \label{outerloop}}
{
    $N\gets 1$\;
    \While{N > 0}
    {
        $\mathcal{R}_f \gets \mathtt{Dijkstra\_Algorithm}
        (G_r, \:\text{\emph{f}} \:,\:\text{\emph{f}}\:)$\;
        $N \gets \mathtt{Update\_Edges}(\mathcal{R}_f, T, G_r, B)$\;
    }
    
    $\mathcal{R}_f^{suf} \gets \mathcal{R}_f$\;
    $v_0 \gets (s_0,q_0)$\;
    $\mathcal{R}_f^{pre} \gets \mathtt{Find\_Path} (G_r, \:v_0 \:,\:\emph{f}\:)$
    
}
$\mathcal{R}_P^{suf} \gets \argmin\limits_{\mathcal{R}_f^{suf} \text{ with a valid prefix}}  \mathcal{C}\left(\mathcal{R}_f^{suf}\right)$\;

$\mathcal{R}_P^{pre} \gets \mathtt{find\_prefix}(G_r, \mathcal{R}_P^{suf})$ \;

$\mathcal{R}_P \gets \mathcal{R}_P^{pre} . \mathcal{R}_P^{suf} $\;
project $\mathcal{R}_P$ over $T$ to compute $\mathcal{R}$\;
$\mathtt{return} \mathcal{R}$
\BlankLine
\BlankLine
\SetKwProg{myproc}{Procedure}{}{}\label{p1}
\myproc{ \textbf{Generate\_Redc\_Graph}($B, T$) } {

$v_{init} \gets v_0(s_0, q_0)$\;
    let $Q$ be a queue data-structure\;
    Initialize empty reduced graph $G_r$\;
    label $v_{init}$ as discovered and add it to $G_r$\;
    
    $Q.\mathtt{enqueue}(v_{init})$\;
    \While{$Q$ is not empty}
    {
        $v_i(s_i,q_l) \gets Q.\mathtt{dequeue}(\ )$\;
        \eIf{ $\exists \delta_B ( q_i, c_{neg} ) = q_i$ and $\nexists \delta_B ( q_i, c_{neg} ) = q_j$ }{
            \For{all $v_l(s_l,q_l)$ \text{such that} $\delta_B(q_i, c_{pos}) = q_l$ and $s_l \models c_{pos}$ }
            {
                $w_r(v_i, v_l) \gets \mathtt{heuristic\_cost}(s_i,s_l)$\;
                $\mathcal{U}_r(v_i, v_l) \gets false$\;
                \If{$v_l$ is not labelled as discovered}
                {
                    label $v_l$ as $discovered$ and add it to $G_r$\; 
                    $Q.\mathtt{enqueue}(v_l)$ 
                }
            }
        }
        {
            \For{all $v_l(s_l,q_l)$ such that $\delta_B(q_i, c) = q_l$, $ (s_i,s_j) \in E_T$ and $s_j \models c$ }
            {
                $w_r(v_i, v_l) \gets \mathtt{cost}(s_i,s_l)$\;
                $\mathcal{U}_r(v_i, v_l) \gets true$\;
                \If{$v_l$ is not labelled as discovered}
                {
                    label $v_l$ as $discovered$ and add it to $G_r$\; 
                    $Q.\mathtt{enqueue}(v_l)$ 
                }
            }
            
        }
    }
    return $G_r$
}

\BlankLine
\BlankLine
\SetKwProg{myproc}{Procedure:}{}{}\label{p2}
\myproc{ \textbf{Update\_Edges($\mathcal{R}_f, T, G_r, B$)} } {

$count \gets 0$  \;
    \For{ each edge $v_i(s_i,q_i) \rightarrow v_j(s_j,q_j)$ in $\mathcal{R}_f$}
    {
        \If{ $\exists \delta_B(q_i, c_{neg}) = q_i$ and $\mathcal{U}_r(v_i,v_j) = false$ }
        {
            $\mathcal{O}_T' \gets \{ s \:|\: s\in S_T \text{ and } s \models \neg c_{neg}  \}$\;
            $\mathcal{O} =\mathcal{O}_T \cup \mathcal{O}_T'$\; 
            $d \gets \mathtt{Astar}(T, \mathcal{O}, s_i, s_j)$\;
            $w_r(v_i, v_j) \gets d$, \ \ 
            $\mathcal{U}_r(v_i,v_j) \gets true$\;
            $count \gets count + 1$\;
        }
    }
    $\mathtt{return}$ $count$
}
\caption{T* Algorithm}
\label{tStar}
\end{algorithm2e}

Our aim is to find the minimum cost suffix cycle which starts and ends at a final state $f \in F_r$ in $G_r$. For each such state $f$, we find the minimum cost cycle/suffix run $\mathcal{R}_f$ starting and ending at $f$ using the Dijkstra's shortest path algorithm. 
Now, this cycle might have edges with heuristic cost, so in the next line, we update all the edges in $\mathcal{R}$ with actual cost using procedure $\mathtt{Update\_Edges}$. 
Only the edges which were added in $G_r$ using distant neighbour condition have heuristic value assigned to them and for such edges, all the intermediate nodes between two end points of edge must satisfy $c_{neg}$ self loop condition present on the B\"uchi state of the starting node. 
In this procedure, we first check if the B\"uchi automaton state of the starting node has a self loop with negative transition condition and if the edge has not been updated previously. Then, we find the path from starting node of the edge to the ending node in $T$ considering all the nodes which do not satisfy $c_{neg}$ as obstacles which are represented using $\mathcal{O}'_T$ in the procedure $\mathtt{Update\_Edges}$. Sub-procedure $\mathtt{Astar(s_i,s_j,T,\mathcal{O})}$ computes actual cost of the path from $s_i$ to $s_j$ in $G_T$ considering all the cells in $\mathcal{O}$ as obstacles. We do this step for each edge in $\mathcal{R}_f$ and return the number of edges updated in the run $\mathcal{R}_f$. If the number of edges updated in $\mathcal{R}_f$ is more than 0, then graph $G_r$ has been updated. As some the edge costs are different now, we again find the suffix run for $f$ and repeat the same procedure. But if the number of edges updated is 0 then we have found the minimum cost cycle starting and ending at $f$. We then find a prefix run from initial node $v_{0}(s_0,q_0)$ to $f$ using a procedure $\mathtt{Find\_Path}$. In this procedure, we first find a path from from initial node $(s_0,q_0)$ to $f$ in $G_r$ and then update all the edges as we did for suffix run except we do it just once. We have explicitly omitted the details of this procedure as it could be understood easily from the context. We then move on to the next final state and continue with the outer loop on line~\ref{outerloop}.

Once we find minimum cost suffix runs for all $f \in F_r$, we select the minimum cost suffix run among all $R_f^{suf}$ having a valid prefix as the final suffix $\mathcal{R}_P^{suf}$ and prefix $\mathcal{R}_P^{pre}$. We project it over $T$ to obtain the final satisfying run $\mathcal{R}$.

    

\begin{example}
We will continue with the example we have studied so far in this paper. From the transition system given in Figure \ref{exampleGrid} and the B\"uchi automaton for an LTL task $\square(\Diamond p_{1} \land \Diamond p_{2} \land \neg p_{3})$ given in Figure \ref{exampleAutomata}, we construct a reduced graph $G_r$ as shown in Figure \ref{RedGraph}. Now in $G_r$, we find the minimum cost suffix run/cycle containing a final state $f$ for each $f \in F_r$. Final states in Figure \ref{RedGraph} are shown using dotted circles. Let us start with $((0,7),q_0)$. We run Dijkstra's algorithm with $((0,7),q_0)$ as the source and the destination to obtain a cycle $\mathcal{R}_f = ((0,7),q_0) \xrightarrow[]{1} ((0,6),q_1) \xrightarrow[]{4} ((0,2),q_2) \xrightarrow[]{5} ((0,7),q_0)$. We run $\mathtt{Update\_Edges}$ over $\mathcal{R}_f$. The first edge  $((0,7),q_0) \xrightarrow[]{1} ((0,6),q_1)$ is already updated, no further modification is required. The second edge $((0,6),q_1) \xrightarrow[]{4} ((0,2),q_2)$ has not been updated before, and we compute the actual cost of this edge. There is a self loop with negative constraint $\lnot P_1 \land \lnot P_3$ over $q_1$ in $B$. So, we consider all the states which satisfy $\lnot(\lnot P_1 \land \lnot P_3) = P_1 \lor P_3$ as obstacles and find the shortest path from $(0,6)$ to $(0,2)$ in $T$. The actual cost of this path comes out to be $12$. So, we update the cost of edge from $((0,6),q_1)$ to $((0,2),q_2)$ to $12$ in $G_r$. Similarly, we update the cost of edge from $((0,2),q_2)$ to $((0,7),q_0)$ to $13$. We have updated 2 edges from current $\mathcal{R}_f$. So, we run Dijkstra's algorithm again in $G_r$ from $f$ to $f$ to compute the new cycle as $ \mathcal{R}_f = ((0,7),q_0) \xrightarrow[]{1} ((1,7),q_1) \xrightarrow[]{5} ((6,7),q_2) \xrightarrow[]{6} ((0,7),q_0)$. We again run $\mathtt{Update\_Edges}$ over $\mathcal{R}_f$ to obtain updated edges as $ \mathcal{R}_f = ((0,7),q_0) \xrightarrow[]{1} ((1,7),q_1)) \xrightarrow[]{17} ((6,7),q_2) \xrightarrow[]{18} ((0,7),q_0)$. In this iteration also, we updated two edges. So we run Dijkstra's algorithm again to obtain new $\mathcal{R}_f$ as $ \mathcal{R}_f = ((0,7),q_0)) \xrightarrow[]{1} ((0,6),q_1)) \xrightarrow[]{12} ((0,2),q_2) \xrightarrow[]{13} ((0,7),q_0)$. All the edges on this run are updated and so this is the minimum cost run containing the final state $((0,7),q_0)$. 
    
 
Now, we move to the second final state $f = ((7,0),q_0)$. We find the cycle as $ \mathcal{R}_f = ((7,0),q_0) \xrightarrow[]{1} ((7,1),q_1) \xrightarrow[]{7} ((6,7),q_2) \xrightarrow[]{8} ((7,0),q_0)$. In this, we find the actual cost of all the non-updated edges. Actual costs of all the non-updated edges in this run are same as heuristic costs. So, no edge weight is updated in $G_r$. So, this is our suffix run containing final  state $((7,0),q_0)$. 

We can observe here that, for this iteration, we only explored one cycle containing $((7,0),q_0)$. Using this algorithm saved us from exploring other possible cycles that start and end at $((7,0),q_0)$ but have higher or equal cost. We are able to obtain the solution without computing all the edges of the reduced graph. That is how $T^*$  solves the problem faster than the basic automata-theoretic model checking based approach. In Figure \ref{RedGraph}, we can observe the state of the reduced graph after all the iterations. All the edge weights in the red with round bracket and blue cost written beside them have been updated with actual cost during the algorithm, whereas edge weights which are written with red colour in round brackets and without blue color cost beside it were never explored. This shows the computations saved by the T$^*$ algorithm. 

From above cycles, we select cycle $((7,0),q_0) \xrightarrow[]{1} ((7,1),q_1) \xrightarrow[]{7} ((6,7),q_2) \xrightarrow[]{8} ((7,0),q_0)$ as $\mathcal{R}_P^{suf}$. We skip the computation of prefix $\mathcal{R}_P^{pre}$ as the procedure is almost same as that of computing the suffix run, except that we need one pass of Dijkstra's algorithm and $\mathtt{Update\_Edges}$ to compute it. 
Here, the prefix is $\mathcal{R}_P^{pre} = ((0,0),q_0) \xrightarrow{1} ((1,0),q_1) \xrightarrow{12} ((6,7),q_2) \xrightarrow{8} ((7,0),q_0)$. We project it over $T$ to obtain final solution as $\{(0,0) \rightarrow (1,0) \rightarrow (6,7) \rightarrow (7,0)  \}\{ (7,0) \rightarrow (7,1) \rightarrow (6,7) \rightarrow (7,0) \}^\omega$. We have only mentioned the abstract path here for convenience.
\end{example}
 
\subsection{Computational Complexity}
As we saw earlier, the LTL to B\"{u}chi automaton conversion has the computational complexity $\mathcal{O}(2^{|\Phi|})$ as mentioned in \cite{DuretLutz2004SPOTAE}.
We compute the reduced graph using the BFS algorithm. Thus, the complexity to compute the reduced graph is given as $\mathcal{O} (|V_r| + |E_r|)$. Let $S_{\phi}$ be the set of states of $T$ at which some proposition is defined and a state in $T$ has a constant number of neighbours. Thus, in the worst case, the number of nodes in the reduced graph $(|V_r|)$ is $\mathcal{O}(|S_{\phi}|)$.    
In T$^*$, we use Dijkstra's algorithm over $G_r$ to compute suffix run for each final state $f \in F_r$. During each run of Dijkstra's algorithm, we find a cycle and update all it's edges. At worst case, we might have to run Dijkstra's algorithm as many times as the number of edges in the reduced graph $G_r$. And in Update\_Edges algorithm, we compute the actual weight of the edge using A$^*$ algorithm over $T$. So, A$^*$ can also be invoked as many times as number of edges in $G_r$ in worst case and the complexity of each $A^*$ could be same as Dijkstra's algorithm in worst case which is  $\mathcal{O}(|S_T|* log (|E_T|))$. So, the overall computational complexity of the T$^*$ can be given as $\mathcal{O}( 2 ^ {|\Phi|} + ( |V_r| + |E_r| ) + |E_r| * |E_r|*log|V_r| + |E_r| *|S_T|* log |E_T| )$

\subsection{Correctness and Optimality}

We have proved the following two theorems to establish the correctness of our algorithm and the optimality of the trajectory generated by it. 

\begin{theorem}
The trajectory generated by the T* algorithm satisfies the given LTL query.
\end{theorem}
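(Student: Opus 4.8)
The plan is to show that the run $\mathcal{R} = \mathcal{R}_P^{pre} \cdot (\mathcal{R}_P^{suf})^\omega$ returned by T$^*$ corresponds to a valid accepting run of the product automaton $P = T \times B$, which by the standard automata-theoretic argument (see the Product Automaton construction in Section~\ref{productAutomaton}) implies that the projected trajectory over $T$ satisfies $\phi$. The key observation is that every edge appearing in the reduced graph $G_r$ is a \emph{faithful abstraction} of a genuine path in $P$: an edge added under the product-automaton condition (case~2) is literally an edge of $P$, while an edge $v_i(s_i,q_i) \to v_j(s_j,q_j)$ added under the \emph{distant neighbour} condition (case~1) stands for some finite path in $P$ from $(s_i,q_i)$ that stays on the self-loop $q_i$ (all intermediate states satisfying $c_{neg}$) and then takes the transition $\delta_B(q_i, c_{pos}) = q_j$ with $s_j \models c_{pos}$. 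The first step is therefore to make this correspondence precise as a lemma: for every edge of $G_r$, there is a corresponding path segment in $P$ with the same endpoints (in the product sense), and conversely the concatenation of such segments along any cycle/path in $G_r$ through states of $F_r$ yields a legitimate run of $P$.

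Next I would verify that the distant-neighbour edges are in fact \emph{realizable}, i.e.\ that $\mathtt{Update\_Edges}$ never leaves an unrealizable edge in the final run. When $\mathtt{Update\_Edges}$ processes such an edge, it runs $\mathtt{Astar}$ on $T$ with the obstacle set $\mathcal{O} = \mathcal{O}_T \cup \{s : s \models \neg c_{neg}\}$, i.e.\ it searches for a path from $s_i$ to $s_j$ in $T$ all of whose internal states satisfy $c_{neg}$. If such a path exists, the edge weight is set to its actual cost; crucially, a run $\mathcal{R}_f$ is only accepted once $\mathtt{Update\_Edges}$ returns $0$, meaning every distant-neighbour edge on it has been validated with a concrete realizing path in $T$ (and, lifting that path through the self-loop on $q_i$, a concrete realizing path in $P$). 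I would also note that the prefix run is subjected to the same update pass. Hence, once the algorithm terminates, the chosen $\mathcal{R}_P^{pre}$ and $\mathcal{R}_P^{suf}$ decompose into concrete $P$-paths segment by segment.

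Then I would assemble these segments: the prefix gives a finite $P$-path from the initial product state $(s_0,q_0)$ to the chosen final state $f = (s_f, q_f)$ with $q_f \in Q_f$, and the suffix gives a finite $P$-path from $f$ back to $f$ that contains no other occurrence of $f$; splicing them as $\mathcal{R}_P^{pre} \cdot (\mathcal{R}_P^{suf})^\omega$ produces an infinite $P$-run that visits the accepting state $f$ infinitely often. By definition of the product automaton, the projection of this run onto $T$ is a trajectory whose induced trace is accepted by $B$, hence satisfies $\phi$. The selection line (\texttt{argmin over $\mathcal{R}_f^{suf}$ with a valid prefix}) guarantees the chosen $f$ actually has a reachable prefix, so the assembled run is well-defined.

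The main obstacle I anticipate is the faithfulness lemma for the distant-neighbour abstraction, specifically the claim that replacing the self-loop portion of a $P$-path by a single $G_r$-edge, and later re-expanding it via $\mathtt{Astar}$ on $T$ with obstacles $\neg c_{neg}$, loses no satisfying behaviour and introduces no spurious one. One must check that (i) the intermediate states on the re-expanded $T$-path, lifted to $P$, indeed all sit on the $q_i$ self-loop (this uses that $c_{neg}$ is exactly the self-loop guard and that the ``distant neighbour condition'' requires $q_i$ to have \emph{no other} negative outgoing transition, so the self-loop is the unique way to stay put while relocating), and (ii) the final transition to $q_j$ is enabled because $s_j \models c_{pos}$ is checked when the edge is created. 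A secondary subtlety is that $\mathtt{Astar}$'s heuristic must be admissible (a lower bound on true cost, as the paper's Note states) for the realized path to be legitimate — but for mere \emph{satisfaction} (as opposed to optimality, handled by the next theorem) any realizing $T$-path suffices, so correctness here only needs that $\mathtt{Astar}$ returns some valid path when one exists.
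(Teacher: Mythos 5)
Your proposal is correct and follows essentially the same approach as the paper's proof: both argue edge-by-edge that product-automaton-condition edges are genuine transitions of $P$ and that distant-neighbour edges are realized by $\mathtt{Astar}$ paths whose intermediate states satisfy the $c_{neg}$ self-loop guard, so the assembled prefix--suffix run is an accepting run of $P$ and hence satisfies $\phi$. Your version is considerably more careful than the paper's (which compresses all of this into a few sentences), particularly in making the realizability of distant-neighbour edges and the infinitely-often acceptance condition explicit.
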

\begin{proof}
We generate the reduced graph using BFS algorithm starting at node $v_{0}(s_0,q_0)$. Suppose, we are exploring the neighbours of node $v_i(s_i,q_i)$ during BFS. if distant neighbour condition is satisfied, then we add non-neighbouring vertices into $G_r$ which satisfy an outgoing transition condition from $q_i$ in $B$ and in Update\_Edges we compute the actual path between such nodes such that all the intermediate nodes follow $c_{neg}$ transition condition. So, all the nodes on such paths follows the LTL formulae. And if we add using product automaton condition, then it also satisfies LTL formulae by it's definition. Hence, we can conclude that the overall trajectory generated using T$^*$ satisfies LTL formulae.
\end{proof}

\begin{theorem}
The algorithm computes the least cost suffix run starting and ending at accepting/final state in the product graph/automaton $G_{P}$.
\end{theorem}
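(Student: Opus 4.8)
The plan is to exhibit a cost-preserving correspondence between suffix cycles in the product graph $G_P$ and cycles in the reduced graph $G_r$ on which the inner loop of Algorithm~\ref{tStar} operates, and then to show that the lazy edge-evaluation performed by $\mathtt{Update\_Edges}$ never lets Dijkstra's search settle on a sub-optimal cycle. To make the correspondence precise, observe that every edge of $G_r$ is added by exactly one of the two cases of $\mathtt{Generate\_Redc\_Graph}$: a \emph{product-automaton-condition} edge, which is literally an edge of $G_P$ carrying the same weight $w_T(s_i,s_j)$; or a \emph{distant-neighbour} edge $(v_i,v_j)$ with $v_i=(s_i,q_i)$, $v_j=(s_j,q_j)$, present precisely because $q_i$ has a negative self loop $\delta_B(q_i,c_{neg})=q_i$ and $\delta_B(q_i,c_{pos})=q_j$ with $s_j\models c_{pos}$. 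I read such an edge as a placeholder for the family of paths in $G_P$ from $v_i$ to $v_j$ whose intermediate transitions all use that self loop (hence every intermediate state of $T$ satisfies $c_{neg}$) and whose final transition realises $\delta_B(q_i,c_{pos})=q_j$. Conversely, any suffix cycle in $G_P$ through an accepting state decomposes uniquely into maximal blocks of $c_{neg}$ self-loop transitions, each block terminated by a single non-self-loop transition; replacing each block together with its terminating transition by the corresponding $G_r$ edge yields a cycle in $G_r$ through the same vertex. I would verify that this decomposition is well defined and is a bijection $\Psi$ between suffix cycles through $f$ in $G_P$ and cycles through $f$ in $G_r$: the only reachability discarded in passing to $G_r$ is that supplied by negative self loops, and those are exactly the ``filler'' transitions absorbed into distant-neighbour edges, so no cycle is merged, lost, or created; reading item~(v) of the definition of $G_r$ with $q_i\in Q_f$ as intended, $F_r$ and $F_P$ correspond under $\Psi$.

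Next I would fix a final state $f$ and establish two invariants maintained throughout the $\mathtt{while}$ loop. \emph{(Lower bound.)} At every stage, for each edge $e$ of $G_r$ the weight $w_r(e)$ is at most the $G_P$-cost of every path that $\Psi$ associates to $e$: for a product-condition edge the two are equal; for a distant-neighbour edge $e=(v_i,v_j)$, before evaluation $w_r(e)=\mathtt{heuristic\_cost}(s_i,s_j)$, which by the standing assumption (the Note after Figure~\ref{RedGraph}) is a lower bound on the true distance in $T$ and hence on the length of any $c_{neg}$-respecting path; after evaluation $w_r(e)=d$ is the value returned by $\mathtt{Astar}$ on $T$ with obstacle set $\mathcal{O}_T\cup\{s: s\models\neg c_{neg}\}$, which equals the minimum cost of a $c_{neg}$-respecting path from $s_i$ to $s_j$, i.e.\ the cost of the cheapest $G_P$-path in the family of $e$. \emph{(Monotone exactness.)} Once $\mathcal{U}_r(e)=true$ the weight $w_r(e)$ is never changed again and equals that exact minimum. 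Both invariants follow by inspection of $\mathtt{Generate\_Redc\_Graph}$ and $\mathtt{Update\_Edges}$, together with admissibility of the heuristic.

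The per-$f$ argument then runs as follows. For termination, note that any iteration of the $\mathtt{while}$ loop that does not exit returns $count>0$, hence flips at least one edge from $\mathcal{U}_r=false$ to $true$; flips are monotone and $|E_r|$ is finite, so the loop halts after at most $|E_r|$ iterations with a cycle $\mathcal{R}_f$ all of whose edges are already evaluated (which is why $\mathtt{Update\_Edges}$ reported no change). For optimality, let $C^\star$ be any suffix cycle through $f$ in $G_P$ and $\widetilde{C}^\star=\Psi^{-1}(C^\star)$ the corresponding cycle in the final $G_r$. The lower-bound invariant gives $w_r(\widetilde{C}^\star)\le \mathrm{cost}_{G_P}(C^\star)$; the last Dijkstra call returned $\mathcal{R}_f$ as a minimum-cost cycle through $f$ in that same $G_r$, so $w_r(\mathcal{R}_f)\le w_r(\widetilde{C}^\star)$; and since every edge of $\mathcal{R}_f$ is evaluated, $\mathrm{cost}_{G_P}(\Psi(\mathcal{R}_f))=w_r(\mathcal{R}_f)$. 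Chaining these, $\mathrm{cost}_{G_P}(\Psi(\mathcal{R}_f))\le \mathrm{cost}_{G_P}(C^\star)$, so $\Psi(\mathcal{R}_f)$ is a least-cost suffix cycle through $f$ in $G_P$. Finally the algorithm sets $\mathcal{R}_P^{suf}=\argmin$ over those $f\in F_r$ that admit a valid prefix, which under $\Psi$ are exactly the accepting states of $G_P$ reachable from $S_{P,0}$; therefore $\mathcal{R}_P^{suf}$ projects to a least-cost suffix run starting and ending at an accepting state of $G_P$.

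The main obstacle is the lower-bound invariant: running Dijkstra on a graph whose weights are deliberate under-estimates is safe only if no evaluation ever lowers a weight and every under-estimate really is one. I expect the care to go into (i) arguing that the heuristic under-estimates the \emph{constrained} distance in $T$ — distance under the added obstacles $\{s: s\models\neg c_{neg}\}$ — and not merely the free-space distance, and (ii) confirming that the constrained $\mathtt{Astar}$ call returns the exact constrained optimum, so that an evaluated edge carries precisely the corresponding $G_P$ segment cost. A secondary point needing attention is that $\Psi$ is a genuine bijection on suffix cycles even though $\mathtt{Generate\_Redc\_Graph}$ removes only \emph{some} self loops — the negative ones, and only when $q_i$ has no competing negative transition — so that each $c_{neg}$-block of a $G_P$ cycle is bracketed by exactly one terminating transition and no cycle is split or conflated.
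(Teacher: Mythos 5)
Your proposal is correct and follows essentially the same approach as the paper's own proof: the reduced graph preserves all accepting states and (via distant-neighbour edges standing in for $c_{neg}$-respecting path families) all suffix cycles, heuristic weights are admissible underestimates of the constrained distances, and therefore a Dijkstra-returned cycle whose edges are all evaluated must be globally optimal since every competing cycle still carries a cost no greater than its true cost. Your write-up is considerably more rigorous than the paper's sketch --- in particular the explicit lower-bound/monotone-exactness invariants and the termination argument are only implicit there --- but the underlying argument is the same.
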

\begin{proof}
In the whole algorithm, we only ignore self loop transitions with negative transition condition($c_{neg}$). So, we can conclude that all the final states present in the product graph will also be present in the reduced graph.  Next we argue that all the minimum cost paths starting and ending at a final state in product automata $P$ are preserved in the reduced graph $G_r$. We have only omitted negative self loop transitions in the reduced graph. Whenever a node has been added after skipping the negative self loop transition, path to it is being computed whenever required considering corresponding negative transition condition using the Dijkstra's algorithm. Also, we have initialised all the distant neighbour nodes using heuristic value which is lower bound to the actual cost. So, whenever we compute a run in the reduced graph using Dijkstra's algorithm, we update all the edges of it with actual cost. Now, all the other paths available in the reduced graph at that time will have costs less than or equal to their actual costs. So, if we computed a run using Dijkstra algorithm whose all the edges have been updated, it is indeed minimum cost run among all the possible runs. This argument concludes the proof        
\end{proof}

With these two theorems, we can establish the correctness and optimality of T$^*$ algorithm.

\section{Evaluation}
\label{sec-results}

In this section, we present several results to establish the computational efficiency of T* algorithm. The results have been obtained on a desktop computer with a 3.4 GHz quad core processor with 16 GB of RAM. We use \textsf{LTL2TGBA} tool~\cite{DuretLutz2004SPOTAE} as the LTL query to B\"{u}chi automaton converter.

\subsection{Workspace Description and LTL Queries}\label{ltl queries}
The robot workspace is represented as a 2-D or a 3-D grid. Each cell in the grid is denoted by integer coordinates ($x$,$y$) for a 2-D workspace, and by ($x$,$y$,$z$) for a 3-D workspace.
For the 2-D workspace, each of these cells have 2 horizontal, 2 vertical, and 4 diagonal neighbours. Similarly, for the 3-D workspace, there are 6 non-diagonal and 20 diagonal neighbours. The cost of an edge connecting diagonal neighbours is 1.5 and the non-diagonal neighbours is 1. We consider several data gathering tasks performed by a robot by visiting some locations. 

We evaluated T* algorithm for seven LTL queries borrowed from~\cite{SmithTBR11}.
The LTL queries are denoted by $\Phi_{A}, \Phi_{B}, \ldots, \Phi_{G}$. Here, we mention two of those LTL specifications, $\Phi_{C}$ and $\Phi_{D}$, in detail.
Here, propositions $p_1$, $p_2$, $p_3$ denote the data gathering locations and propositions $p_4$ and $p_5$ denote the data upload locations.

\noindent
1) We want the robot to gather data from all the three locations and upload the gathered data to one of the data upload locations. Moreover, after visiting an upload location, robot must not visit another upload location until it visits a data gathering location.
The query can be represented as  
$\Phi_{C} = \square( \Diamond p_1 \land \Diamond p_2 \land \Diamond p_3 ) \land \square(\Diamond p_4 \lor \Diamond p_5) \land \square( (p_4 \lor p_5) \to X((\neg p_4 \land \neg p_5) U ( p_1 \lor p_2 \lor p_3))).$

\noindent
2) It can happen that the data of each location has to be uploaded individually before moving to another gathering place. We can put an until constraint that once a data gathering location is visited, do not visit another gathering location until the data is uploaded. This can be captured as 
$\Phi_{D} = \Phi_{C} \land  \square( (p_1 \lor p_2 \lor p_3) \to X((\neg p_1 \land \neg p_2 \land \neg p_3) U ( p_4 \lor p_5))).$

\subsection{Results on Comparison with Standard Algorithm~\cite{belta5650896}}

We Compare T* algorithm with the standard Dijkstra's algorithm based LTL motion planning algorithm~\cite{belta5650896} on a couple of workspaces. 
The workspaces which have been borrowed from~\cite{SmithTBR11} and~\cite{SahaRKPS16} are shown in Figure~\ref{Plot for query C D in workspace 1} and Figure~\ref{Plot for query C D in workspace 2} respectively. The workspace is $100 \times 100$. The 3-D workspace size is $100 \times 100 \times 20$.
The co-ordinates of the pickup and drop locations have been chosen at obstacle-free locations. 
Each data point in Table~\ref{table-comparison} is an average of data obtained from 10 experiments.

\subsubsection{2-D Workspace}

\begin{table}[t]
\footnotesize
\begin{center}
\begin{tabular}{|c|c|c|c|c|c|c|}
\hline
Spec & \multicolumn{3}{c|}{Workspace 1} & \multicolumn{3}{c|}{Workspace 2} \\
\cline{2-7}
 & Baseline & T* & Speedup & Baseline & T* & Speedup\\
     \hline
     $\Phi_{A}$ & 1.09 & 0.28  & \FPeval{\result}{round(1.09/0.28,2)}%
    $\result$  & 1.265 & 0.877  & \FPeval{\result}{round(1.265/0.877,2)}%
    $\result$\\ 
     \hline
     $\Phi_{B}$ & 1.02 & 0.13  &  \FPeval{\result}{round(1.02/0.13,2)}%
    $\result$ & 1.167 & 0.429 & \FPeval{\result}{round(1.167/0.429,2)}%
    $\result$\\ 
     \hline
     $\Phi_{C}$ & 3.58 & 0.16  &  \FPeval{\result}{round(3.58/0.16,2)}%
    $\result$ & 4.215 & 0.350  & \FPeval{\result}{round(4.215/0.35,2)}%
    $\result$ \\ 
     \hline
    $\Phi_{D}$ & 4.93 & 0.27 & \FPeval{\result}{round(4.93/0.27,2)}%
    $\result$ & 5.979 & 0.549 & \FPeval{\result}{round(5.979/0.549,2)}%
    $\result$\\
    \hline
    $\Phi_{E}$ & 4.72  & 0.52  & \FPeval{\result}{round(4.72/0.52,2)}%
    $\result$ & 5.268  & 1.156 & \FPeval{\result}{round(5.268/1.156,2)}%
    $\result$\\
    \hline
    $\Phi_{F}$ & 9.57  & 0.29  & \FPeval{\result}{round(9.57/0.29,2)}%
    $\result$ & 11.276  & 0.557  & \FPeval{\result}{round(11.276/0.557,2)}%
    $\result$\\
    \hline
    $\Phi_{G}$ & 5.57 & 0.26  & \FPeval{\result}{round(5.57/0.26,2)}%
    $\result$ & 6.87 & 0.546  & \FPeval{\result}{round(6.87/0.54,2)}%
    $\result$\\
\hline
\end{tabular}
\end{center}
\caption{Comparison with Standard LTL Motion Planning Algorithm~\cite{belta5650896} for 2-D Workspace\label{table-comparison}}
\end{table}

Table~\ref{table-comparison} shows the speedup of T* over the standard algorithm for workspace 1 and workspace 2. From the table, it is evident that for both the workspaces and for several LTL queries, T* provides over an order of magnitude improvement in running time with respect to the standard algorithm.
Trajectories for queries $\Phi_{C}$ and $\Phi_{D}$ in workspace 1 and workspace 2 generated by T* algorithm are shown in Figure~\ref{Plot for query C D in workspace 1} and Figure~\ref{Plot for query C D in workspace 2} respectively.

\begin{figure}[H]
    \centering
    \includegraphics[scale=0.2]{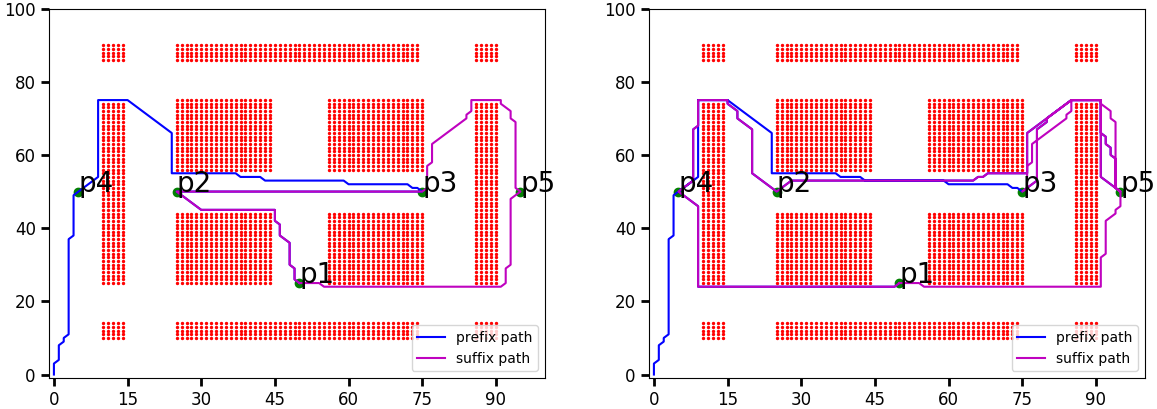}
    \caption{Trajectories for query $\Phi_{C}$ and $\Phi_{D}$ in workspace 1 generated by T*}
    \label{Plot for query C D in workspace 1}
\end{figure}

\begin{figure}[H]
    \centering
\includegraphics[scale=0.2]{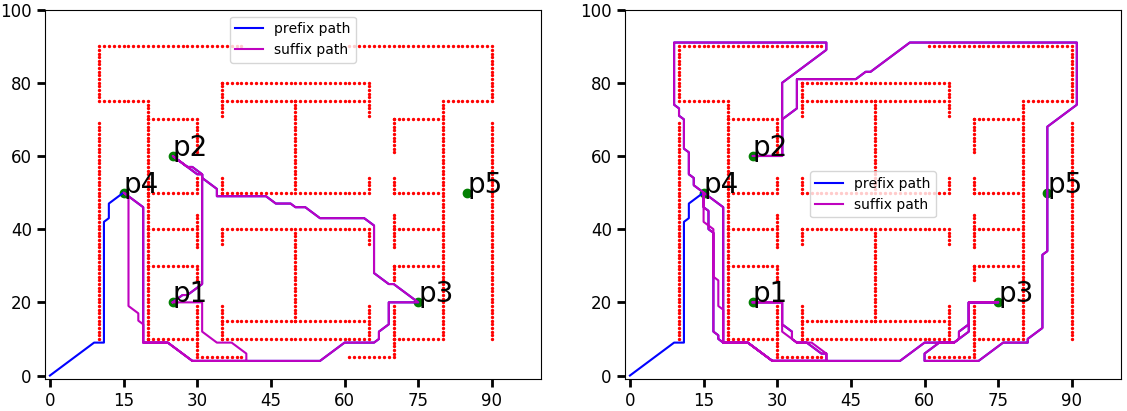}
    \caption{Trajectories for query $\Phi_{C}$ and $\Phi_{D}$ in workspace 2 generated by T*}
    \label{Plot for query C D in workspace 2}
\end{figure}

\subsubsection{3-D workspace}

\begin{table}[t]
\footnotesize
\begin{center}
\begin{tabular}{|c|c|c|c|c|c|c|}
\hline
Spec & \multicolumn{3}{c|}{Workspace 1} & \multicolumn{3}{c|}{Workspace 2} \\
\cline{2-7}
 & Baseline & T* & Speedup & Baseline & T* & Speedup\\
     \hline
     $\Phi_{A}$ & 105.88 & 36.10  & \FPeval{\result}{round(105.88/36.10,2)}%
    $\result$  & 84.32 & 24.42  & \FPeval{\result}{round(84.32/24.42,2)}%
    $\result$\\ 
     \hline
     $\Phi_{B}$ & 101.66 & 23.03  &  \FPeval{\result}{round(101.66/23.03,2)}%
    $\result$ & 81.57 & 19.04 & \FPeval{\result}{round(81.57/19.04,2)}%
    $\result$\\ 
     \hline
     $\Phi_{C}$ & 412.79 & 28.58  &  \FPeval{\result}{round(412.79/28.58,2)}%
    $\result$ & 519.55 & 22.91  & \FPeval{\result}{round(519.55/22.91,2)}%
    $\result$ \\ 
     \hline
    $\Phi_{D}$ & 464.69 & 51.19 & \FPeval{\result}{round(464.69/51.19,2)}%
    $\result$ & 497.99 & 35.89 & \FPeval{\result}{round(497.99/35.89,2)}%
    $\result$\\
    \hline
    $\Phi_{E}$ & 402.62 & 81.27  & \FPeval{\result}{round(402.62/81.27,2)}%
    $\result$ & 475.58  & 64.58 & \FPeval{\result}{round(475.58/64.58,2)}%
    $\result$\\
    \hline
    $\Phi_{F}$ & 869.98  & 47.01  & \FPeval{\result}{round(869.98/47.01,2)}%
    $\result$ & 941.00  & 35.86  & \FPeval{\result}{round(941.00/35.86,2)}%
    $\result$\\
    \hline
    $\Phi_{G}$ & 501.95 & 46.61  & \FPeval{\result}{round(501.95/46.61,2)}%
    $\result$ & 507.38 & 46.24  & \FPeval{\result}{round(507.38/46.24,2)}%
    $\result$\\
\hline
\end{tabular}
\end{center}
\caption{Comparison with Standard LTL Motion Planning Algorithm~\cite{belta5650896} over 3-D Workspace\label{table-comparison-3d}}
\end{table} 
Table~\ref{table-comparison-3d} shows the speedup of T* over the standard algorithm for 3-D workspace.

\subsubsection{Memory Consumption Comparison}
Table \ref{memoryComparison} compares the memory used by the both the algorithms when we scale the workspace from Figure \ref{Plot for query C D in workspace 1}  keeping other parameters constant. With the increase in size, the size of the product automaton increases, but the size of the reduced graph remains same. After $500 \times 500$, memory required to run A$^*$ dominates and hence memory consumption of T$^*$ also start increasing slowly.

\begin{table}[t]
\footnotesize
\begin{center}
\begin{tabular}{|c|c|c|c|c|}
\hline
     Workspace Size & Spec  & Baseline(KB) & T*(KB) & $\%$ Savings \\
     \hline
     $100 \times 100$ & $\Phi_D$  & 42.7 & 18.8 &  \FPeval{\result}{round((42.7 - 18.8)/42.7*100,1)}$\result$\\
     \hline
     $200 \times 200$ & $\Phi_D$  & 167.3  & 18.5 & \FPeval{\result}{round((167.3 - 18.5)/167.3*100,1)}$\result$  \\
     \hline
     $300 \times 300$ & $\Phi_D$  & 375.7 & 18.5 & \FPeval{\result}{round((375.7 - 18.5)/375.7*100,1)}$\result$ \\
     \hline
     $400 \times 400$ & $\Phi_D$  & 671.7 & 18.5 & \FPeval{\result}{round((671.7 - 18.5)/671.7*100,1)}$\result$ \\
     \hline
     $500 \times 500$ & $\Phi_D$  & 1072.38 & 25.5 & \FPeval{\result}{round((1072.38 - 25.5)/1072.38*100,1)}$\result$\\
     \hline
     $600 \times 600$ & $\Phi_D$ & 1510 & 34.34 &  \FPeval{\result}{round((1510 - 34.34)/1510*100,1)}$\result$\\
\hline
\end{tabular}
\end{center}
\caption{Memory Usage Comparison with Baseline Solution~\cite{belta5650896} }
\label{memoryComparison}
\vspace{-0.3cm}
\end{table}

\subsection{Analysis of T* Performance with Different Parameters}
\label{Analysis of T* with parameters}
This section contains the results related to the speedup of T* in comparison to the standard algorithm with the change in obstacle density, size of the workspace, and complexity of the LTL queries. 
In these results, we exclude the time taken for LTL to B\"{u}chi automaton conversion as this step requires equal amount of time for both the algorithms.

\begin{figure}[t]
\
\centering
{
\subfigure[Obstacle density]{
\centering
\resizebox{0.3\linewidth}{!}
{
\begin{tikzpicture}
\begin{axis}[
	x tick label style={
		/pgf/number format/1000 sep=},
	ylabel=\Huge{speedup},
	xlabel=\Huge{obstacle density},
	enlargelimits=0.05,
	legend style={at={(0.5,-0.2)},
	anchor=north,legend columns=-1},
]
\addplot
coordinates { (5, 91.1009423854)
(10, 92.3964207221)
(15, 76.5106748609)
(20, 58.7999975124)
(25, 49.3844380337)
(30, 37.739259902)
(35, 29.5017754947)
(40, 24.4415669048)};
\end{axis}
\end{tikzpicture}
}}
\centering
\subfigure[Spec complexity]{
\resizebox{0.3\linewidth}{!}
{
\begin{tikzpicture}
\begin{axis}[
	x tick label style={
		/pgf/number format/1000 sep=},
	ylabel=\Huge{speedup},
	xlabel= \Huge{number of locations},
	enlargelimits=0.05,
	legend style={at={(0.5,-0.2)},
	anchor=north,legend columns=-1},
]
\addplot
    coordinates { (3, 17.3691411351)
(6, 37.3735442692)
(9, 62.6985502942)
(12, 108.551276015)};
\end{axis}
\end{tikzpicture}
}}
\subfigure[Workspace size]{
\resizebox{0.3\linewidth}{!}
{
\begin{tikzpicture}
\begin{axis}[
	x tick label style={
		/pgf/number format/1000 sep=},
	ylabel=\Huge{speedup},
	xlabel= \Huge{x-axis dimension},
	enlargelimits=0.05,
	legend style={at={(0.5,-0.2)},
	anchor=north,legend columns=-1},
]
\addplot
coordinates { 
(20, 11.4544874973)
(100, 19.8989457192)
(200, 21.722599786)
(300, 22.4555525637)
(400, 22.4854421654)
(500, 22.149291206)
};
\end{axis}
\end{tikzpicture}
}}
}
\caption{Speedup achieved by T* }
\label{fig:optimizationGraph-ncs}
\vspace{-0.5cm}
\end{figure}
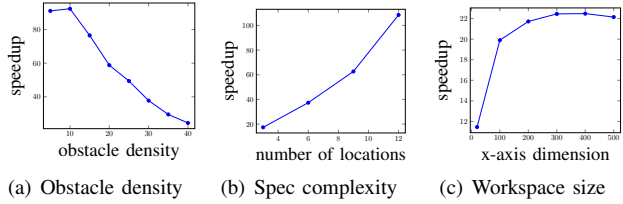

\subsubsection{Obstacle Density}
On increasing the obstacle density from 5 to 40 percent in a 2-D workspace of size $100\times100$, the speedup of T* in comparison with the standard algorithm for LTL query $\Phi_{D}$ decreases as shown in Figure~\ref{fig:optimizationGraph-ncs}(a). The obstacle locations are generated randomly.
Due to the increase in the obstacle density, the heuristic distances become significantly lesser than the actual distances. This results in an increase in the number of times the Dijkstra's algorithm invoked during the computation of the $\mathcal{R}_f^{suf}$ and updates to edge costs in $G_{r}$ as many times we find lower cost suffix run whose cost increases by a lot after Update\_Edges algorithm. This causes the reduction in the performance of T$^*$. As T$^*$ is a heuristic based algorithm, lesser the difference between heuristic cost and the actual cost, higher is the performance.
\subsubsection{Complexity of LTL Query}
We consider the LTL query $\Phi_{D}$ for this experiment.
Starting with 2 gather and 1 upload locations, the number of gather locations is incremented by 2 and that of the upload locations by 1 for 4 instances. The speedup is as shown in Figure~\ref{fig:optimizationGraph-ncs}(b). Speedup increases as $T^*$ explores available choices opportunistically based on the heuristic values whereas baseline solution explores all the choices gradually. speedup may vary based on the problem instance as this is heuristic based algorithm.  

\subsubsection{The size of the workspace}
We experimented with query $\Phi_{D}$ on Workspace 1 shown in Figure~\ref{Plot for query C D in workspace 1}
by increasing the size of the square grid. 
We considered the following sizes for the side length of the square grid: 20, 100, 200, 300, 400 and 500.
As shown in Figure~\ref{fig:optimizationGraph-ncs}, Speedup increases with increase in the size of the workspace keeping the other parameters constant. Size of the product graph increases significantly and hence time to tun the Dijkstra's algorithm. Whereas the size of the reduced graph remains same but time to run A$^*$ will increase with increase in workspace size.

\subsection{Experiments with Robot}
\label{Experiment with robot}
We used the trajectory generated by T* algorithm to carry out experiments with a Turtlebot on a 2-D grid of size $5\times5$ with four non-diagonal movements to the left, right, forward and backward direction. 
The cost of the forward and backward movement is 1,
whereas the cost of the left and righ movemnet is 1.5 as it involves a rotation followed by a forward movement.
The trajectories corresponding to the two queries $\Phi_{C}$ and $\Phi_{D}$ were executed by the Turtlebot. The location of Turtlebot in the workspace was tracked using Vicon localization system~\cite{vicon}. The video of our experiments is submitted as a suplimentary material.

\section{Conclusion}
\label{sec-conclusion}

In this work, we have developed a static LTL motion planning algorithm for robots with transition system with discrete state-space. 
Our algorithm opportunistically utilizes A* search which expands less number of nodes and thus is significantly  faster than the standard LTL motion planning algorithm based on Dijkstra's shortest path algorithm. Our future work includes evaluating our algorithm for non-holonomic robotic systems and extending it for multi-robot systems and dynamic environments.

\balance{

}

\end{document}